\documentclass[11pt,a4paper]{article}
\usepackage[margin=25mm]{geometry}
\usepackage[T1]{fontenc}
\usepackage{lmodern,microtype,xurl}
\usepackage{amsmath,amssymb,amsthm,mathtools}
\usepackage{graphicx,booktabs,array,enumitem,float}
\usepackage[numbers,sort&compress]{natbib}
\usepackage[dvipsnames]{xcolor}
\usepackage[colorlinks=true,linkcolor=MidnightBlue,citecolor=MidnightBlue,urlcolor=MidnightBlue]{hyperref}
\usepackage[font=small,labelfont=bf]{caption}
\usepackage{fancyhdr}
\setlist{nosep,leftmargin=*}
\newtheorem{theorem}{Theorem}
\newtheorem{proposition}[theorem]{Proposition}
\newtheorem{corollary}[theorem]{Corollary}
\newtheorem{lemma}[theorem]{Lemma}
\newcommand{\ind}{\mathbf{1}}
\newcommand{\R}{\mathbb{R}}
\newcommand{\U}{U_\lambda}

\title{\vspace{-8mm}\textbf{Exact Memory-Time Optimization for\\ Prefix-Cached Language Model Serving}}
\author{Shivam Gupta\\\small Independent Researcher, Delhi, India\\\small\texttt{shivam1720406@gmail.com}}
\date{23 September 2026}
\begin{document}
\maketitle
\begin{abstract}
Retaining language-model prefix states trades recomputation against storage time.
Optimizing each cached block independently can overcount savings: a resident block
is usable only when the required preceding prefix is also available. We introduce
Prefix-Certificate Retention (PCR), an exact finite-trace formulation for static,
grouped, reset-on-access timeouts. Usable-prefix rewards become nodes whose
prerequisites are timeout thresholds and preceding hit certificates. The resulting
maximum-weight closure reduces to one minimum cut, with graph size linear in the
number of block lookups and timeout choices. A breakpoint theorem extends the
construction to all nonnegative timeouts without discretization error. We also
derive a linear-time-in-grid-size dynamic program for ordered timeouts and bounds
that certify the cost of this restriction. Exhaustive small-instance checks and
chronological replay of 39,632 public Mooncake requests validate the formulation.
On the fixed grid, ordered timeouts attain the unrestricted training optimum in
118 of 120 trace--grouping--price cases. Heterogeneous retention improves several
held-out memory-time tradeoffs, but finer training optimization does not uniformly
improve transfer. The contribution is a tractable optimization model and an
auditable benchmark for retention policies; the experiments measure usable prefix
blocks and storage time, not GPU latency.
\end{abstract}
\noindent\textbf{Keywords:} prefix caching; maximum closure; time-to-live optimization;
language model serving; memory-time budgets.\\
\textbf{Code and reproducibility:}\\
\href{https://github.com/shi1720/prefix-certificate-retention}{\texttt{github.com/shi1720/prefix-certificate-retention}}

\section{Introduction}
Language-model services repeatedly process shared system prompts, conversation
histories and agent instructions. Retaining their key--value (KV) states can avoid
repeated prefill work, but every retained block occupies storage until it is reused
or discarded. PagedAttention, RadixAttention and disaggregated cache architectures
provide practical mechanisms for this reuse \cite{paged,sglang,mooncake}. They also
create an operational decision: how long should different parts of a prefix remain
available?

The optimization is subtle even before queueing and hardware constraints enter.
Consider a request for blocks $(A,B)$. If $B$ remains in an independently addressed
store but $A$ has expired, counting $B$ as a saved prefix block is incorrect for a
serving interface that resumes only from a contiguous cached prefix. Requiring
the timeout of $A$ to be at least that of $B$ prevents this inconsistency. However,
that requirement can retain $A$ unnecessarily: other requests may refresh $A$
while $B$ waits for reuse. Thus, independent hit accounting can be optimistic,
whereas universally prefix-preserving timeout ordering can be conservative.

This paper asks three questions. Can actual prefix reuse be optimized exactly
without imposing timeout order? Can the gap between ordered and unrestricted
retention be certified? And do the resulting profiles improve memory-time
tradeoffs on later requests? The first two questions admit deterministic answers
within a precise model. The third requires empirical evaluation and has a mixed
answer.

We study static group timeouts fitted to a historical trace. The objective is the
number of reusable full prefix blocks minus a declared storage price times
occupied block-seconds. It is a configuration objective for a cache store, not a
complete inference-system objective. The main contributions are:
\begin{enumerate}
\item An exact representation of contiguous-prefix reuse by a directed acyclic
graph of hit certificates. Combining this representation with incremental
storage costs yields a maximum-weight closure problem, solved by a standard
minimum-cut algorithm (Theorem~\ref{thm:cut}).
\item A reuse-age breakpoint theorem that removes the timeout-grid restriction
while retaining a graph of linear size (Theorem~\ref{thm:event}).
\item A coherent-policy dynamic program and a three-way optimality bound,
together with a strict counterexample and expected-budget certificates
(Section~\ref{sec:bounds}).
\item Reproducible tests and public-trace experiments separating mathematical
optimality, the practical value of a simpler restriction, and held-out transfer
(Sections~\ref{sec:exp}--\ref{sec:results}).
\end{enumerate}
Maximum closure and its reduction to minimum cut are classical \cite{picard}.
Our contribution is the prefix-retention formulation, its exact accounting and
breakpoint structure, and the resulting certification study. Neither differentiated
timeouts nor prefix-aware cache management is claimed as new.

\section{Relation to prior work}
\paragraph{Timer-based cache optimization.}
Ferragut, Rodriguez and Paganini characterize timer policies through arrival
distributions and hazard rates \cite{ferragut}. Soft-TTL optimizes time-varying
fractions of retained files \cite{softttl}. These results establish substantial
precedent for retention optimization. Here the input is a finite trace, the policy
is a static timeout per block group, and the reward includes conjunctions of
prefix availability. We do not require a renewal model or estimate hazard rates.

\paragraph{Prefix and agent-serving systems.}
PagedAttention addresses KV memory management \cite{paged}; SGLang's
RadixAttention exploits shared prefixes \cite{sglang}. CachedAttention retains
conversation state across requests \cite{cachedattention}, while Mooncake
integrates a shared KV store with disaggregated serving \cite{mooncake}. Recent
agent-serving work includes workflow-level scheduling in SAGA \cite{saga} and
execution-transition-guided eviction and prefetching in CacheScout
\cite{cachescout}. PCR supplies an offline optimization benchmark for one
retention subproblem. It does not replace their scheduling, transfer or execution
models.

\paragraph{The closest configuration formulation.}
Kareto groups frequent prefix subtrees, models timeout-dependent block hits and
storage exposure, and uses multi-start sequential quadratic programming for a
budgeted group-timeout allocation \cite{kareto}. Grouping by popular prefixes and
fitting timeouts from historical accesses are therefore explicit antecedents.
PCR differs by representing usable-prefix conjunctions directly, censoring
occupancy at the observation horizon, and solving a priced static-timeout problem
globally. Our group baseline is a controlled simplification inspired by this
literature, not a reproduction of Kareto's tiered-storage system or its reported
latency results.

\paragraph{Dependency constraints and graph optimization.}
Dependency-aware online caching studies cache decisions with object prerequisites
and hard-capacity constraints \cite{dependency}. Our unrestricted model permits
orphaned stored blocks but awards reuse only to contiguous prefixes; its constraint
is priced memory-time. This difference is what allows temporal refreshes to make
unordered timeouts useful. Picard's maximum-closure reduction \cite{picard}
provides the algorithmic foundation once the correct hit prerequisites have been
identified. The distinction is in the cache model, not in a new general-purpose
flow algorithm. A dated literature matrix and search scope accompany the code.

\section{Trace model and exact accounting}\label{sec:model}
Let the observation window be $[0,T]$. A request $r$ arrives at time $t_r$ and
contains an ordered list of full blocks $(b_{r,1},\ldots,b_{r,L_r})$. A block ID
identifies the entire prefix ending at that block; each non-root block has a
unique predecessor. The total number of block lookups is
$M=\sum_r L_r$. Blocks are mapped to $G$ fixed groups by $g(b)$. One timeout
$\tau_g\geq0$ is used for every block in group $g$.

The store starts empty. All requests at a common timestamp read the pre-batch
state before any of them writes. After those reads, every full block in the batch
is available and its expiration is reset to $t+\tau_{g(b)}$. Zero timeout means
no retention. A positive timeout remains valid at its expiration instant. These
conventions make tied arrivals and exact boundary hits unambiguous. Physical
storage is keyed by cumulative hashes and may contain a block whose predecessor
is absent. The serving interface counts only the longest contiguous available
prefix; it does not reuse an isolated suffix after recomputing a missing ancestor.

Let $a_{r,d}$ be the time since the most recent \emph{strictly earlier timestamp
batch} containing $b_{r,d}$, with $a_{r,d}=\infty$ on its first access. Then local
availability and usable-prefix availability are
\begin{align}
 x_{r,d}(\tau)&=\ind\{0<a_{r,d}\leq\tau_{g(b_{r,d})}\},\label{eq:local}\\
 z_{r,d}(\tau)&=\prod_{j=1}^{d}x_{r,j}(\tau),\qquad
 H(\tau)=\sum_r\sum_{d=1}^{L_r}z_{r,d}(\tau).\label{eq:hit}
\end{align}
The strict inequality in~\eqref{eq:local} follows from batch semantics; it does
not discard reuse across distinct positive-age batches. More generally, each
$z_{r,d}$ can have an exogenous nonnegative reward $w_{r,d}$. All reported
experiments use $w_{r,d}=1$.

For a physical block $b$ with distinct access timestamps
$s_{b,1}<\cdots<s_{b,m_b}$, set $s_{b,m_b+1}=T$. Its occupied time is
\begin{equation}
 C_b(\tau)=\sum_{j=1}^{m_b}\min\{\tau,s_{b,j+1}-s_{b,j}\}.
 \label{eq:cost}
\end{equation}
Define $C_g(\tau)=\sum_{b:g(b)=g}C_b(\tau)$ and
$C(\boldsymbol\tau)=\sum_g C_g(\tau_g)$. The priced objective is
\begin{equation}
 \max_{\boldsymbol\tau\in\mathcal T}\quad
 \U(\boldsymbol\tau):=H(\boldsymbol\tau)-\lambda C(\boldsymbol\tau),
 \qquad \lambda\geq0.\label{eq:objective}
\end{equation}
The price has units of saved blocks per occupied block-second. Mean occupancy
is $C/T$. For a fixed model whose full blocks occupy $q$ bytes, byte-seconds are
$qC$; converting rewards to money or latency requires additional measurements.

\begin{lemma}[Exact occupied time]\label{lem:cost}
Equation~\eqref{eq:cost} equals the integral of the block's residency indicator
over $[0,T]$, including when accesses have multiplicity within a batch.
\end{lemma}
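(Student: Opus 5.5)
The plan is to write the residency indicator of block $b$ explicitly from the batch semantics of Section~\ref{sec:model}, split $[0,T]$ at the distinct access timestamps, and integrate piece by piece. Let $\rho_b(t)\in\{0,1\}$ indicate that $b$ is resident at time $t$. First we handle multiplicity. All requests sharing a timestamp read the pre-batch state and then jointly write. Writing the same block several times in one batch therefore sets its expiration to the same value $t+\tau$. So the residency trajectory depends only on the set of distinct timestamps $s_{b,1}<\cdots<s_{b,m_b}$, not on how many lookups occur in each batch. This disposes of the multiplicity clause at the outset. Also, before $s_{b,1}$ the store has never received $b$, since it starts empty, so $\rho_b=0$ on $[0,s_{b,1})$.

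Next, fix $j$ and consider the interval $I_j=[s_{b,j},s_{b,j+1})$, with the convention $s_{b,m_b+1}=T$. Just after the write at $s_{b,j}$, the expiration is $s_{b,j}+\tau$. There is no access to $b$ strictly inside $I_j$, so nothing refreshes it. If $\tau=0$, the zero-timeout convention says there is no retention, and $\rho_b=0$ on $I_j$ except possibly at the single instant $s_{b,j}$, which has measure zero. If $\tau>0$, the block is resident exactly on $[s_{b,j},\,s_{b,j}+\tau]\cap I_j$. Here we use that validity at the expiration instant affects only a single point. The next access at $s_{b,j+1}$ (if $j<m_b$) resets the clock, and this reset is correct whether or not the block had expired, because after the batch write every full block of the batch is available. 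Hence
\[
\int_{I_j}\rho_b(t)\,dt=\min\{\tau,\;s_{b,j+1}-s_{b,j}\}.
\]
This holds in both regimes: if $\tau$ is less than the gap, the block expires inside $I_j$; otherwise it survives until the refresh or until the horizon $T$. In the latter case occupancy is censored at $T$, which is exactly the role of setting $s_{b,m_b+1}=T$.

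Finally, the intervals $[0,s_{b,1})$ and $I_1,\ldots,I_{m_b}$ partition $[0,T)$, and the point $T$ has measure zero. Summing the per-interval integrals therefore gives $\int_0^T\rho_b=C_b(\tau)$, which is~\eqref{eq:cost}.

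I expect the only delicate step to be the bookkeeping of boundary and tie conventions. Specifically, we must check that each of the following changes $\rho_b$ on at most a null set: inclusive validity at $s_{b,j}+\tau$, the read-before-write rule at a shared timestamp, and $\tau=0$. The last case needs care: one must confirm that $\tau=0$ contributes $0=\min\{0,\cdot\}$ rather than a point mass. I would state explicitly that residency is integrated with respect to Lebesgue measure, so that these single instants are irrelevant. I would also note that a block whose last access equals $T$ contributes a degenerate interval of length zero.
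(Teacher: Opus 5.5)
Your proposal is correct and follows essentially the same route as the paper. Both partition the window after the first access into the intervals $[s_{b,j},s_{b,j+1})$, observe that no reset occurs inside each, so the block contributes $\min\{\tau,s_{b,j+1}-s_{b,j}\}$, and sum; both handle multiplicity via the single physical write per batch and censoring via $s_{b,m_b+1}=T$. Your explicit treatment of the empty interval before $s_{b,1}$, the zero-timeout case, and the measure-zero boundary instants only makes precise details that the paper's proof leaves implicit.
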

\begin{proof}
Partition the window after the first access into
$[s_{b,j},s_{b,j+1})$. On each interval there is no intervening reset. The block
is resident for exactly the smaller of its timeout and the interval length.
Summing these disjoint contributions gives~\eqref{eq:cost}. A batch writes one
physical copy regardless of how many requests read it. The final interval ends
at $T$, so no storage outside the observation window is charged.
\end{proof}

The assumptions are consequential. New blocks are available immediately after
their arrival batch, so this is a zero-service-time cache replay. A real system
must wait for computation and transfer. Cache pinning by active requests, decode
state, admission control, model-version invalidation and capacity-triggered
eviction are outside~\eqref{eq:objective}. Changing those semantics can invalidate
both the additive cost and the threshold representation.

\section{Prefix certificates and the minimum-cut formulation}\label{sec:cut}
For group $g$, let its finite ordered timeout choices be
$0=q_{g,0}<q_{g,1}<\cdots<q_{g,K_g}$. A shared grid is a special case. Introduce
threshold variables $y_{g,k}=\ind\{\tau_g\geq q_{g,k}\}$ for $k\geq1$, with
prerequisites $y_{g,k}\Rightarrow y_{g,k-1}$ for $k>1$. The incremental costs
\begin{equation}
 \Delta C_{g,k}=C_g(q_{g,k})-C_g(q_{g,k-1})\geq0
\end{equation}
give $C_g(\tau_g)=\sum_{k=1}^{K_g}\Delta C_{g,k}y_{g,k}$.

For a finite age $a_{r,d}$, define
$\kappa(r,d)=\min\{k\geq1:q_{g(b_{r,d}),k}\geq a_{r,d}\}$.
If this set is empty, or any earlier block in the request has no feasible local
hit, that depth and all later depths cannot contribute and are omitted.
Otherwise introduce a hit node $z_{r,d}$ with implications
\begin{equation}
 z_{r,d}\Rightarrow y_{g(b_{r,d}),\kappa(r,d)},\qquad
 z_{r,d}\Rightarrow z_{r,d-1}\quad(d>1).\label{eq:arcs}
\end{equation}
Give hit nodes weight $w_{r,d}\geq0$ and threshold nodes weight
$-\lambda\Delta C_{g,k}$. A \emph{closed} node set contains every prerequisite
of every node it contains. Choosing a maximum-weight closed set exactly balances
the rewards against the timeouts required to make them usable.

\begin{theorem}[Exact grid optimization]\label{thm:cut}
Under the model of Section~\ref{sec:model}, maximizing~\eqref{eq:objective} over
the finite product grid is a maximum-weight closure problem with
$O(M+\sum_g K_g)$ vertices and arcs. It is solved by one minimum $s$--$t$ cut.
For unit rewards, its optimal value equals $W-F$, where $W$ is the sum of all
hit-node weights and $F$ is the minimum-cut value.
\end{theorem}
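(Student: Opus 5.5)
The plan is to set up an explicit bijection between grid timeout vectors and a class of closed sets, show that the closure weight equals $\U$ on that class, and then invoke Picard's reduction \cite{picard}.

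\emph{Encoding.} A grid vector $\boldsymbol\tau$ with $\tau_g=q_{g,k_g}$ maps to the threshold set $Y(\boldsymbol\tau)=\{y_{g,k}:k\le k_g\}$. This set is closed under $y_{g,k}\Rightarrow y_{g,k-1}$. Conversely, every closed set of threshold nodes is a down-set in each chain, and so it is $Y(\boldsymbol\tau)$ for a unique grid vector. By the telescoping identity $C_g(\tau_g)=\sum_k\Delta C_{g,k}y_{g,k}$ and Lemma~\ref{lem:cost}, the threshold weight of $Y(\boldsymbol\tau)$ is $-\lambda C(\boldsymbol\tau)$.

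\emph{Hit nodes.} For the retained hit nodes we need the equivalence $x_{r,d}(\tau)=1$ if and only if $\tau_g\ge q_{g,\kappa(r,d)}$. This holds because $\tau_g$ lies on the grid and $\kappa$ selects the smallest grid point that is at least $a_{r,d}>0$. Every omitted depth is one where some $x_{r,j}$ with $j\le d$ is identically zero on the grid, so $z_{r,d}\equiv0$ there and dropping it loses nothing. Induction on $d$ along the arcs~\eqref{eq:arcs} then shows that $z_{r,d}$ may lie in a closed set containing $Y(\boldsymbol\tau)$ exactly when $z_{r,d}(\tau)=1$.

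\emph{Equality of optima.} For any closed set $S$, let $\boldsymbol\tau$ be the vector read off from its threshold part. Then $S$ contains only hit nodes with $z_{r,d}(\tau)=1$. Since $w\ge0$, its weight is at most $\U(\boldsymbol\tau)$. In the other direction, $Y(\boldsymbol\tau)$ together with all satisfied hit nodes is closed and has weight exactly $\U(\boldsymbol\tau)$. So the maximum closure value equals the grid maximum, and an optimal closure yields an optimal $\boldsymbol\tau$.

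\emph{Size.} There is at most one hit node per lookup, which gives at most $M$ nodes, and $\sum_gK_g$ threshold nodes. Each hit node has at most two outgoing arcs and each threshold node at most one. Picard's construction adds one source or sink arc per node. The network therefore has $O(M+\sum_gK_g)$ vertices and arcs.

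\emph{Min cut.} Build the standard network: $s\to v$ with capacity $w_v$ for hit nodes, $v\to t$ with capacity $\lambda\Delta C_{g,k}$ for threshold nodes, and an infinite arc for each implication. A finite cut $(S\cup\{s\},\bar S\cup\{t\})$ corresponds to a closed $S$. Its capacity is $\sum_{v\notin S}w_v+\sum_{v\in S}\lambda\Delta C_v=W-\mathrm{weight}(S)$. Hence the optimum equals $W-F$. This identity holds for any nonnegative $w$; the unit-reward case is simply $W=$ the number of hit nodes.

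The main obstacle is the omission rule together with the grid equivalence, in particular at the boundaries. One must check that the strict condition $0<a$ and the inclusive condition $a\le\tau$ in~\eqref{eq:local} match the use of $\ge$ in $\kappa$. One must also check that the zero timeout $q_{g,0}=0$, which has no node, correctly makes every finite-age hit require $y_{g,\kappa}$ with $\kappa\ge1$. I would verify both points directly from the definitions.
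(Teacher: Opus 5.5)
Your proposal is correct and follows essentially the same route as the paper: grid profiles correspond to initial segments of the threshold chains with telescoped costs, and the arcs in~\eqref{eq:arcs} force any included hit node to be usable, while adding all usable hit nodes keeps the set closed and does not decrease weight; Picard's source/sink construction then gives $W-F$. The differences are cosmetic. You use infinite capacities on prerequisite arcs where the paper uses a finite $Q>W$, justified by the all-sink cut of value $W$. You also verify the $\kappa$ boundary equivalence and the validity of $W-F$ for arbitrary nonnegative rewards more explicitly than the paper does.
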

\begin{proof}
Any timeout profile selects an initial segment of each threshold chain. Adding
exactly the usable hit nodes yields a closed set: the local threshold and all
earlier hit certificates exist whenever a prefix is usable. The node-weight sum
is its utility by telescoping the incremental costs.

Conversely, a closed set selects initial threshold segments and thus a valid
timeout profile. It cannot contain an unusable hit node, because recursively
following~\eqref{eq:arcs} enforces every required local threshold. Adding any
omitted usable hit node and its hit prerequisites does not require new thresholds
and cannot reduce weight. Hence a maximum-weight closed set attains exactly the
best timeout utility.

For completeness, add an arc from the source to every positive-weight node with
capacity equal to its weight, and an arc from every negative-weight node to the
sink with capacity equal to the negative of its weight. Every prerequisite arc
gets capacity $Q>W$. The all-sink cut has value $W$, so a minimum cut cannot sever
a prerequisite arc. Its source side is therefore closed. Its cut capacity is
$W$ minus the source-side node weight, proving the identity. There is one
threshold node per positive choice, at most one hit node per lookup, and at most
two prerequisite arcs per hit node. This is the classical closure reduction
\cite{picard} applied to the constructed cache graph.
\end{proof}

The graph-size claim is not a linear-time bound for maximum flow. Building
threshold indices by binary search takes
$O(M\log(1+\max_gK_g))$ time once the access ages and grids are available.
Identical certificates can be merged: a certificate is keyed by its predecessor
certificate, group, and threshold index. Summing their rewards preserves every
profile's utility. Our implementation uses this compression.

\begin{figure}[t]
 \centering\includegraphics[width=\linewidth]{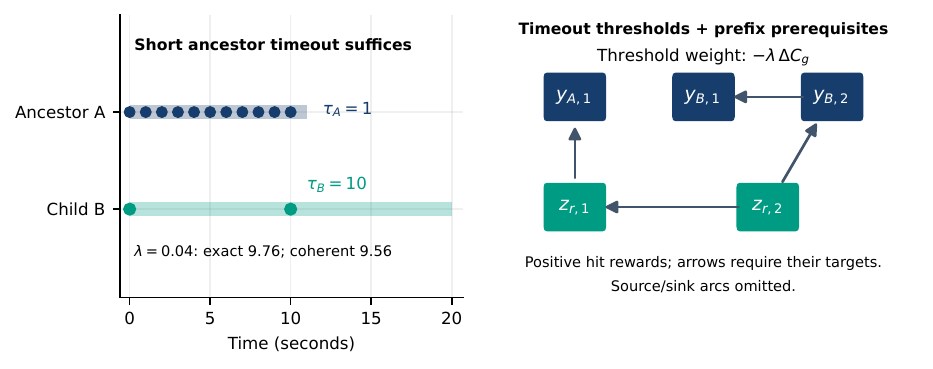}
 \caption{Why prefix certificates are needed. Left: requests for $(A,B)$ at
 times 0 and 10, and for $A$ at times 1 through 9. A short timeout for $A$ can
 coexist with a longer timeout for $B$ because $A$ is refreshed. Right: each
 reward requires its local timeout threshold and the preceding prefix reward.
 Arrows denote prerequisites. The graph schematic is illustrative; the strict
 numerical example is evaluated at horizon 20.}
 \label{fig:model}
\end{figure}

\subsection{Removing the discretization restriction}
Let $\mathcal A_g$ contain all finite positive reuse ages of blocks in group $g$
on the trace, even when the corresponding full prefix cannot be reused. Set
$\mathcal Q_g=\{0\}\cup\mathcal A_g$.

\begin{theorem}[Reuse-age sufficiency]\label{thm:event}
For every $\boldsymbol\tau\in\R_+^G$, there exists
$\boldsymbol\tau'\in\prod_g\mathcal Q_g$ with
$H(\boldsymbol\tau')=H(\boldsymbol\tau)$ and
$C(\boldsymbol\tau')\leq C(\boldsymbol\tau)$. Consequently, for every
$\lambda\geq0$, the continuous nonnegative-timeout problem has an optimum on
these finite grids. Its closure graph has $O(M+G)$ vertices and arcs.
\end{theorem}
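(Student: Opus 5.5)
The plan is to round each group's timeout down to the largest reuse age it still covers. Independently for each group $g$, define
\[
\tau'_g=\max\bigl(\{0\}\cup\{a\in\mathcal A_g: a\le\tau_g\}\bigr).
\]
The maximum exists because $\mathcal A_g$ is finite; the trace is finite. By construction $\tau'_g\le\tau_g$ and $\tau'_g\in\mathcal Q_g$. The proof then has three steps: show that hits are unchanged, show that cost does not increase, and deduce existence of an optimum together with the graph size.

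For the hit count, fix any lookup $(r,d)$ with $g=g(b_{r,d})$, and recall the local indicator~\eqref{eq:local}. If $a_{r,d}=\infty$, the indicator is zero under both profiles. Otherwise $a_{r,d}$ is a finite positive age of a block in group $g$. We have $a_{r,d}\in\mathcal A_g$ even when $a_{r,d}$ is zero, since zero ages never satisfy the strict inequality anyway. The equivalence needed is $a_{r,d}\le\tau_g \iff a_{r,d}\le\tau'_g$. The backward direction holds because $\tau'_g\le\tau_g$. The forward direction holds because $a_{r,d}$ is a candidate in the maximum defining $\tau'_g$. Hence every $x_{r,d}$ is unchanged, so every product $z_{r,d}$ in~\eqref{eq:hit} is unchanged, and so is $H$, with any nonnegative rewards $w_{r,d}$. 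This is the place where including all reuse ages, not only those of usable prefixes, is used: no case split on usability is needed.

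For the cost, each term $\min\{\tau,s_{b,j+1}-s_{b,j}\}$ in~\eqref{eq:cost} is nondecreasing in $\tau$. So $C_g$ is nondecreasing, and $C(\boldsymbol\tau')\le C(\boldsymbol\tau)$. It follows that $\U(\boldsymbol\tau')\ge\U(\boldsymbol\tau)$ for every $\lambda\ge0$. The maximum of $\U$ over the finite set $\prod_g\mathcal Q_g$ is attained, and by the rounding it dominates every nonnegative profile. Therefore it is a continuous optimum.

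For the size bound, apply Theorem~\ref{thm:cut} with $q_{g,k}$ the sorted elements of $\mathcal Q_g$. Each lookup contributes at most one age, so $\sum_g|\mathcal A_g|\le M$ and $\sum_gK_g\le M$. The vertex and arc count $O(M+\sum_gK_g)$ therefore becomes $O(M+G)$; the $G$ term accounts for the zero choices and the per-group chain bookkeeping.

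The main obstacle I expect is boundary bookkeeping rather than depth. I must check that the age definition via strictly earlier batches makes every counted hit's age an element of $\mathcal A_g$. I must also check that the closed inequality $a\le\tau$ in the semantics matches the rounding, including ties at expiration instants. Finally, I must confirm that cost monotonicity does not depend on $\tau'$ being a trace age, so that rounding down never changes which intervals are censored at $T$.
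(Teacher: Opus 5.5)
Your proposal is correct and is essentially the paper's proof: round each $\tau_g$ down to the largest member of $\mathcal Q_g$ not exceeding it, observe that every comparison $a_{r,d}\le\tau_g$ keeps its truth value (so every $x_{r,d}$, every prefix product $z_{r,d}$, and $H$ are unchanged), use that each $C_g$ is nondecreasing for the cost, and count at most $M$ reuse ages to obtain the graph size from Theorem~\ref{thm:cut}. One small wording fix: finite ages are automatically positive because they are measured from a strictly earlier batch, so your aside that $a_{r,d}\in\mathcal A_g$ ``even when $a_{r,d}$ is zero'' is unnecessary, and as written it is inaccurate, since $\mathcal A_g$ contains only positive ages.
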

\begin{proof}
For each group, round its timeout down to the largest member of $\mathcal Q_g$
not exceeding it. Every reuse-age comparison has the same truth value before
and after rounding: if an age was at most the old timeout, that age itself is
an eligible grid member and is no larger than the rounded timeout. False
comparisons remain false. All local indicators, and hence their prefix products,
are unchanged. Each $C_g$ is nondecreasing, so cost cannot increase. There are at
most $M$ finite reuse ages before deduplication. The graph bound follows from
Theorem~\ref{thm:cut}.
\end{proof}

Cost tables on these grids need not be formed by a quadratic scan. For a group,
collect the exposure lengths in~\eqref{eq:cost}, sort them as
$e_1\leq\cdots\leq e_n$, and store prefix sums $S_j=\sum_{i\leq j}e_i$.
If $j(q)$ is the number of exposures strictly below $q$, then
\begin{equation}
 C_g(q)=S_{j(q)}+(n-j(q))q.\label{eq:fastcost}
\end{equation}
Sorting ages and exposures gives $O(M\log M+G)$ preprocessing and linear graph
storage. The experiment implementation uses integer microseconds for threshold
comparisons and evaluates~\eqref{eq:fastcost} in block-seconds.

\section{Coherent policies, bounds and budgets}\label{sec:bounds}
Suppose the group relations form a forest. If a physical block's predecessor is
in a different group, its group is the parent of the block's group; groups may
also contain consecutive blocks. Write $p(g)$ for a group's parent. A
\emph{coherent} profile satisfies $\tau_g\leq\tau_{p(g)}$ whenever a parent exists.

\begin{proposition}[A sufficient invariant, not a necessary trace condition]
Coherent timeouts ensure that every resident block has its whole preceding
prefix resident, for every trace satisfying the model. Without this ordering,
the invariant need not hold. However, ordering can be strictly suboptimal for
the finite-trace utility.
\end{proposition}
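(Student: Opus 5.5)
The proposition has three parts, and I would prove them in order: a sufficiency invariant, a counterexample showing the invariant fails without ordering, and a strict-suboptimality example.

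\textbf{Sufficiency.} I would argue by induction over time, or equivalently by directly comparing expiration times. Fix a physical block $b$ with predecessor $b'$ that is resident at time $t$. Let $s\le t$ be the last batch containing $b$, so $t\le s+\tau_{g(b)}$ and $\tau_{g(b)}>0$. Every request containing $b$ contains its whole prefix, because a block ID identifies the entire prefix. Hence $b'$ was also written at $s$, and its expiration is at least $s+\tau_{g(b')}$; a later reset of $b'$ only extends it. Two cases remain. If $g(b')=g(b)$, the timeouts are equal. If the groups differ, then $g(b')=p(g(b))$ by the forest convention, and coherence gives $\tau_{g(b')}\ge\tau_{g(b)}$. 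In both cases $b'$ is resident at $t$, using the closed-interval validity convention at the boundary. Iterating up the predecessor chain gives the whole prefix. The one point I would check carefully is that the forest convention really makes the cross-group parent equal to the predecessor's group.

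\textbf{Invariant can fail.} I would use the two-block example: request $(A,B)$ at time $0$, with groups $A\in g_1$ and $B\in g_2=$ child, and $\tau_{g_1}=1$, $\tau_{g_2}=5$. At time $2$, $B$ is resident but $A$ is not.

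\textbf{Strict suboptimality.} I would use the example of Figure~\ref{fig:model}, with horizon $T=20$: requests $(A,B)$ at $0$ and $10$, and $A$ at $1,\dots,9$. The reuse ages are $1$ for every access of $A$ and $10$ for $B$ at time $10$. By Theorem~\ref{thm:event}, it suffices to check the grids $\{0,1\}$ for $A$ and $\{0,10\}$ for $B$.

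The unordered profile $\tau_A=1$, $\tau_B=10$ yields $H=9+2=11$, counting $A$ at $1,\dots,10$ and $B$ at $10$. Its cost is $C_A=10\cdot1+1=11$, from intervals of length $1$ plus the final interval, and $C_B=10+10=20$.

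The coherent profiles need $\tau_A\ge\tau_B$. The options are:
\begin{itemize}
\item $(0,0)$, with utility $0$;
\item $(1,0)$, with utility $10-11\lambda$;
\item $(10,10)$, which has the same $H=11$ but $C_A=10\cdot1+10=20$, so utility $11-40\lambda$.
\end{itemize}

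The unordered profile has utility $11-31\lambda$. It beats $(10,10)$ by $9\lambda$ and beats $(1,0)$ whenever $1>20\lambda$. It beats $0$ whenever $\lambda<11/31$. So any $\lambda\in(0,1/20)$, say $\lambda=0.01$, gives strict suboptimality. I would verify each cost sum against~\eqref{eq:cost}.

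The main obstacle is bookkeeping rather than ideas: computing the ages under the strict-earlier-batch convention, and checking that no coherent off-grid value does better. Theorem~\ref{thm:event} handles the latter, because rounding down preserves coherence monotonically.
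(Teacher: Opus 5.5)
Your sufficiency argument and your orphaned-child example are the paper's own argument, written out in more detail. Your arithmetic for the strict-gap example is also correct. There is, however, a false step in how you justify that only three coherent profiles need checking.

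\textbf{The false step.} You reduce the coherent problem to the per-group grids $\mathcal Q_A=\{0,1\}$ and $\mathcal Q_B=\{0,10\}$, on the grounds that Theorem~\ref{thm:event}'s rounding ``preserves coherence monotonically''. It does not. Each group is rounded to its \emph{own} grid, and with different grids that map is not monotone across groups. The coherent profile $(\tau_A,\tau_B)=(10,10)$ rounds to $(1,10)$, which is exactly the incoherent unrestricted profile.

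Taken literally, your reduction leaves only $(0,0)$ and $(1,0)$ as coherent candidates. The profile $(10,10)$ is not even on the grid you say suffices. Yet at your chosen $\lambda=0.01$ it is the coherent optimum: $11-40\lambda=10.6$, against $10-11\lambda=9.89$ for $(1,0)$. Your conclusion survives only because you listed $(10,10)$ anyway, which is inconsistent with the reduction you stated.

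\textbf{How to repair it.} Either of the following closes the gap.
\begin{itemize}
\item Round down to the common grid $\bigcup_g\mathcal Q_g=\{0,1,10\}$. This is a single nondecreasing map, so it preserves $\tau_A\geq\tau_B$. It still preserves every age comparison, because each $\mathcal A_g$ lies in it. Then enumerate the six ordered profiles on $\{0,1,10\}$, as the paper does. The profiles $(1,1)$, $(10,0)$ and $(10,1)$ all have $H=10$, with costs $13$, $20$ and $22$, so $(1,0)$ dominates them.
\item Split directly on $H\in\{0,10,11\}$. If $H=11$, then $\tau_B\geq10$, so coherence forces $\tau_A\geq10$ and $C\geq40$. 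If $H=10$, then $\tau_A\geq1$, so $C\geq11$.
\end{itemize}

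With either repair, your range $\lambda\in(0,1/20)$ gives strict suboptimality over all nonnegative coherent timeouts. That is slightly stronger than the paper's statement, which is confined to the grid $\{0,1,10\}$ at $\lambda=0.04$.
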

\begin{proof}
Every child access is also an access to its predecessor. At any time, the most
recent predecessor reset is at least as recent as the child's. Its timeout is
also at least as long, so child residency implies predecessor residency;
induction gives the full prefix. If a child's timeout exceeds its predecessor's,
one access followed by silence produces an interval with an orphaned child.

For strict suboptimality, use the trace in Figure~\ref{fig:model}, horizon
$T=20$, choices $\{0,1,10\}$, and $\lambda=0.04$. The unrestricted profile
$(\tau_A,\tau_B)=(1,10)$ gives $H=11$, $C=31$ and utility $9.76$. Among the six
ordered profiles, the best is $(1,0)$, with $H=10$, $C=11$ and utility $9.56$.
The profile $(10,10)$ has utility $11-0.04(40)=9.40$. Exhaustion of the remaining
ordered profiles proves the strict gap.
\end{proof}

Let $h_g(k)$ count local block hits for group $g$ at common-grid label $k$, with
request multiplicity retained, and let $c_g(k)=C_g(q_k)$. Under coherence these
local hits equal the usable hits. For a group $v$, define
\begin{equation}
 D_v(k)=h_v(k)-\lambda c_v(k)
       +\sum_{u:p(u)=v}\max_{0\leq j\leq k}D_u(j).\label{eq:dp}
\end{equation}
The optimum is the sum of $\max_kD_v(k)$ over roots. A postorder traversal with
prefix maxima and traceback takes $O(G(K+1))$ time and storage. This follows by
conditioning on the parent's label and optimizing independent child subtrees.
Smallest maximizing labels resolve ties.

Define $U_{\rm coh}^*$ as the coherent optimum, $U_{\rm PCR}^*$ as the
unrestricted usable-prefix optimum, and the independently optimized local-hit
score
\begin{equation}
 U_{\rm ind}^*=\sum_g\max_k\{h_g(k)-\lambda c_g(k)\}.
\end{equation}
All three quantities in the following bound use the same finite grid and group
assignment.
\begin{corollary}[Certification of a simpler policy]\label{cor:bounds}
\begin{equation}
 U_{\rm coh}^*\leq U_{\rm PCR}^*\leq U_{\rm ind}^*.\label{eq:sandwich}
\end{equation}
Thus $U_{\rm ind}^*-U_{\rm coh}^*$ is a computable upper bound on the price of
coherence. If it vanishes, the coherent policy is unrestricted-optimal.
\end{corollary}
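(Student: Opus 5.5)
The plan is to prove the two inequalities in \eqref{eq:sandwich} separately, comparing all three quantities on the same finite grid and group assignment. The last sentence of the corollary then follows immediately: if $U_{\rm ind}^*-U_{\rm coh}^*=0$, the sandwich forces $U_{\rm coh}^*=U_{\rm PCR}^*$. Hence the coherent maximizer returned by the dynamic program~\eqref{eq:dp} is unrestricted-optimal. Throughout, the cost term $\lambda C$ is identical in all three objectives for a fixed profile, since $C(\boldsymbol\tau)=\sum_g C_g(\tau_g)=\sum_g c_g(k_g)$. Only the reward terms need comparison.

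\textbf{Left inequality.} Coherent profiles form a subset of the product grid. On that subset, the coherent objective must coincide with $\U$. To show this, use the Proposition: under coherence every resident block has its whole preceding prefix resident. Consequently $x_{r,j}=1$ for all $j<d$ whenever $x_{r,d}=1$, so $z_{r,d}=x_{r,d}$. This is the claim that ``local hits equal usable hits'' under coherence. Summing gives $H(\boldsymbol\tau)=\sum_g h_g(k_g)$, so the value optimized by~\eqref{eq:dp} equals $\U$ on coherent profiles. Maximizing over a subset yields $U_{\rm coh}^*\le U_{\rm PCR}^*$.

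The delicate point is that the Proposition is stated about residency over time. I need it at the read instants of the batch semantics, with the strict condition $0<a_{r,d}$. My approach is to argue directly from the ages. Every access to $b_{r,d}$ in a strictly earlier batch is also an access to its predecessor $b_{r,d-1}$ in that same batch. Hence $0<a_{r,d-1}\le a_{r,d}$. Moreover $\tau_{g(b_{r,d-1})}\ge\tau_{g(b_{r,d})}$, either because the two blocks share a group or because of the coherence ordering along the forest. Therefore $x_{r,d}=1$ implies $x_{r,d-1}=1$, and induction on $d$ finishes the argument. I expect this step to be the main obstacle, mostly in checking that the group forest correctly encodes predecessor relations within a group as well as across groups.

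\textbf{Right inequality.} Here I argue pointwise. For every profile on the grid, $z_{r,d}=\prod_{j\le d}x_{r,j}\le x_{r,d}$, so
\[
H(\boldsymbol\tau)\le\sum_r\sum_d x_{r,d}(\boldsymbol\tau)=\sum_g h_g(k_g).
\]
Hence $\U(\boldsymbol\tau)\le\sum_g\bigl(h_g(k_g)-\lambda c_g(k_g)\bigr)$. The right-hand side separates by group, so it is at most $\sum_g\max_k\{h_g(k)-\lambda c_g(k)\}=U_{\rm ind}^*$. Taking the maximum over $\boldsymbol\tau$ gives $U_{\rm PCR}^*\le U_{\rm ind}^*$. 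Since all quantities are finite maxima over the grid, computability of the bound follows from Theorem~\ref{thm:cut} together with the $O(G(K+1))$ dynamic program.
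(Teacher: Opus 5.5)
Your proposal is correct and follows the paper's proof: the lower bound holds because the coherent profiles are a subset of the unrestricted grid, and the upper bound follows from the pointwise inequality $z_{r,d}\le x_{r,d}$ with identical costs, then separable maximization over groups. Your age argument ($0<a_{r,d-1}\le a_{r,d}$ together with $\tau_{g(b_{r,d})}\le\tau_{g(b_{r,d-1})}$) also proves the claim the paper only asserts before~\eqref{eq:dp}: under coherence, local hits equal usable hits. That identity is what makes the dynamic program's value coincide with the maximum of $U_\lambda$ over coherent profiles.
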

\begin{proof}
The coherent feasible set is a subset of the unrestricted set. On any profile,
$\sum_{r,d}z_{r,d}\leq\sum_{r,d}x_{r,d}$, while costs are identical. Maximizing
the latter separable objective proves the upper bound.
\end{proof}
The raw-hit bound need not be close. If a root is never retained but many deeper
blocks survive until a second request, raw local hits can be positive while
usable-prefix hits are zero. This is an accounting counterexample, not an
empirical claim that the released traces exhibit arbitrary distortion.

\subsection{Expected memory-time budgets}
For any deterministic policy with $C\leq B$ and any price $\lambda\geq0$,
\begin{equation}
 H\leq U_{\rm PCR}^*(\lambda)+\lambda B.\label{eq:budgetbound}
\end{equation}
This is a Lagrangian upper bound. It does not imply that a single priced policy
solves every deterministic budget constraint.

\begin{proposition}[Optimal mixtures for an expected budget]\label{prop:mixtures}
Fix a finite policy class containing no retention. Permit a policy to be drawn
once before replay, independently of the trace, and require $\mathbb E C\leq B$.
The optimal expected reward is the nondecreasing upper boundary of the convex
hull of achievable $(C,H)$ pairs. An optimum uses at most two policies. On a
supported segment,~\eqref{eq:budgetbound} is tight with the corresponding
supporting price.
\end{proposition}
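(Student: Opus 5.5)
We will treat this as a standard linear program over probability vectors on the finite policy class $\mathcal P=\{\pi_1,\ldots,\pi_n\}$. Write $(C_i,H_i)$ for the deterministic cost and reward of $\pi_i$ on the trace. Because the draw is made once and independently of the trace, a mixture $p$ produces $\mathbb E C=\sum_i p_iC_i$ and $\mathbb E H=\sum_i p_iH_i$. The achievable set of expected pairs is therefore exactly the convex hull $\mathcal K=\mathrm{conv}\{(C_i,H_i)\}$. The value function is
\[
V(B)=\max\{h:(c,h)\in\mathcal K,\ c\leq B\}.
\]
The no-retention policy has $C=H=0$, so $V(B)$ is feasible for all $B\geq0$.

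\textbf{Boundary characterization.} First, $V$ is concave on $[0,\infty)$, since $\mathcal K$ is convex and the constraint $c\le B$ is monotone in $B$. Second, $V$ is nondecreasing by construction. Third, $V(B)=\max_{c\le B}\bar h(c)$, where $\bar h$ is the upper boundary of $\mathcal K$. Hence $V$ is the nondecreasing upper envelope of the hull, equal to $\bar h$ up to its maximizer and constant afterwards.

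\textbf{Two-policy support.} The maximizing point $(c^*,V(B))$ lies on the upper boundary of the planar polygon $\mathcal K$. Every boundary point of a polygon lies on an edge or at a vertex. It is therefore a convex combination of at most two vertices, which are pure policies. Alternatively, one can argue via the LP with two constraints ($\sum p_i=1$ and $\sum p_iC_i\le B$) and appeal to basic feasible solutions. I prefer the geometric argument because it avoids degenerate-basis bookkeeping.

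\textbf{Tightness of the Lagrangian bound.} On a supported segment joining two vertices $(C_i,H_i)$ and $(C_j,H_j)$ with $C_i<C_j$, take $\lambda=(H_j-H_i)/(C_j-C_i)\ge0$. Since the segment is an edge of the upper nondecreasing boundary, the line $h=\lambda c+\beta$ supports $\mathcal K$. Thus $H_k-\lambda C_k\le\beta$ for all $k$, with equality at $i$ and $j$. So $\max_k(H_k-\lambda C_k)=\beta$. This is the class analogue of $U^*_{\rm PCR}(\lambda)$ when the class is the PCR grid, or more generally the class optimum. Mixing $i$ and $j$ to attain $\mathbb E C=B$ gives $\mathbb E H=\beta+\lambda B$, which matches the expected version of~\eqref{eq:budgetbound}.

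That expected version follows from linearity: $\mathbb E H\le \mathbb E[\max(H-\lambda C)]+\lambda\mathbb E C\le U^*+\lambda B$. When $U^*$ is the unrestricted PCR optimum, the class must be the PCR grid for equality. Otherwise the bound should be stated with the class-restricted optimum, and I would phrase tightness that way.

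\textbf{Main obstacle.} I expect the main difficulty to be the edge cases of the supported-segment claim. Budgets beyond the hull's maximizer give $\lambda=0$, and this case needs separate handling. Vertical edges of $\mathcal K$ must be shown not to lie on the nondecreasing boundary. Nonnegativity of the supporting price needs checking, and it comes from monotonicity.
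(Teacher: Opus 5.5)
Your proposal is correct and follows essentially the paper's route. The achievable expected pairs form the convex hull; the at-most-two-policy support comes from the planar structure, which you obtain from edges and vertices of the polygon where the paper counts positive entries of an extreme solution of the two-constraint LP (the same fact); and the nonnegative-slope supporting line of the efficient hull supplies the price certificate. Your caveat sharpens something the paper leaves implicit. The supporting-line intercept is the optimum of $H-\lambda C$ over the chosen class, so tightness of~\eqref{eq:budgetbound} with $U^*_{\rm PCR}(\lambda)$ itself needs the class to attain that optimum at the supporting price. The class need not be the full grid, as you say, but the full grid does attain it, whereas the coherent class used for the released frontiers need not.
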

\begin{proof}
A distribution on policies has expected $(C,H)$ equal to a convex combination
of their pairs; every such combination is implementable by a draw before replay.
Maximizing expected reward is a linear program with one normalization equality
and one budget inequality. An extreme optimal solution has at most two positive
probabilities (one if the budget is slack). A supporting line of the efficient
upper hull has nonnegative slope and yields the stated price certificate.
\end{proof}

For adjacent candidate points $(C_a,H_a)$ and $(C_b,H_b)$, query the optimizer at
$\lambda=(H_b-H_a)/(C_b-C_a)$. If its utility exceeds their supporting line, insert
the returned point and recurse; otherwise the segment is certified. Our released
frontier implementation applies this procedure to the coherent dynamic program.
The same argument allows the closure solver as its oracle. Randomizing whole
profiles across independent windows is different from switching timeouts midway
through a live window, which would change the cache state. Neither the mixture
nor a mean-occupancy target guarantees an instantaneous memory limit.

\section{Computational study}\label{sec:exp}
\subsection{Data, endpoint and chronology}
We use the public Mooncake FAST'25 traces \cite{trace,mooncake}: two
production-derived workloads, \texttt{conversation} and \texttt{toolagent}, and
one synthetic workload. Table~\ref{tab:data} gives their sizes. The release is
pinned by Git revision and SHA-256 digests. Every hash path is checked for the
unique-predecessor property. Hashes represent 512-token blocks; a final partial
block is removed because it cannot be counted as a reusable full block.

\begin{table}[t]
\centering\small\setlength{\tabcolsep}{5pt}
\begin{tabular}{lrrrrr}
\toprule
Trace & Requests & \shortstack{Full\\lookups} & \shortstack{Unique\\blocks} & \shortstack{Duration\\(s)} & \shortstack{Train/validation/test\\requests}\\
\midrule
Conversation & 12,031 & 276,491 & 170,899 & 3,537.0 & 4,401 / 2,447 / 5,183\\
Tool agent & 23,608 & 386,041 & 159,983 & 3,537.0 & 8,779 / 4,832 / 9,997\\
Synthetic & 3,993 & 117,888 & 40,148 & 1,022.0 & 1,512 / 807 / 1,674\\
\bottomrule
\end{tabular}
\caption{Public traces after discarding incomplete final blocks. Splits are by
elapsed time, not request count. There are 39,632 requests in total, of which
35,639 are production-derived.}\label{tab:data}
\end{table}

Each trace is split chronologically into 40\% training, 20\% validation and 40\%
test by its last arrival time. Each replay starts empty and stops at its own
window boundary. Timeouts and group keys are fitted on training only. The primary
endpoint is $(H-\lambda C)/M$ on test; secondary measurements are $H/M$, mean
resident blocks $C/T$, peak blocks and fitting time. A paired resampling of
60-second test buckets gives descriptive 95\% intervals for fixed-policy utility
differences (10,000 resamples, seed 20260923). There are only 24 buckets on each
production-derived test trace and seven on the synthetic trace. These intervals
are neither deployment-level replications nor simultaneous confidence bands
across prices, and may understate uncertainty under longer-range dependence.

A local analysis plan was committed before the first policy comparison. It
specified the data, time split, grids, primary coherent-policy selection and
baselines. After inspecting that study, the certificate formulation, timestamp
coalescing and continuous-timeout analyses were added as explicitly exploratory
extensions. They use the same held-out segments, so the paper does not describe
them as a fresh confirmatory evaluation or an external preregistration.

\subsection{Policy classes and baselines}
The fixed timeout grid, in seconds, is
\[
\{0,0.001,0.1,0.3,1,3,10,30,60,120,300,600,900\}.
\]
All policies are evaluated at every price in
\[
\{0,10^{-4},3\!\cdot\!10^{-4},10^{-3},3\!\cdot\!10^{-3},10^{-2},
3\!\cdot\!10^{-2},10^{-1},0.3,1\}.
\]
No single price is chosen as the primary endpoint after seeing the results.
\emph{Global TTL} fits one timeout. \emph{Group TTL} shares one timeout across
first blocks and assigns a timeout to each of the eight most frequent second-block
prefixes and a pooled fallback; child timeouts are bounded by the root timeout.
\emph{Depth TTL} uses one chain with depth bands
\[
[1],\ [2],\ [3,4],\ [5,8],\ [9,16],\ [17,32],\ [33,64],\ [65,128],\ [129,\infty).
\]
\emph{Group-depth TTL} uses the shared root and one eight-band chain per prefix
group, totaling 73 labels when all eight selected prefixes exist. In every case
the root label can cover many distinct physical root blocks. Unseen second-block
hashes use the training-defined fallback.

The primary \emph{selected coherent} policy chooses depth or group-depth using
validation utility at each price; ties favor depth. All profiles remain trained
on training only. PCR is evaluated for each of the four groupings without the
coherence restriction. An independent-local-hit optimizer is retained as an
accounting diagnostic, with actual prefix hits replayed afterward. The
continuous-timeout extension evaluates group-depth PCR at all observed training
reuse ages. It also allows ages above 900 seconds, so its comparison with the
fixed grid combines finer resolution and a larger maximum timeout.

Capacity-constrained least-recently-used (LRU) and first-in-first-out (FIFO)
baselines use 256, 1,024, 4,096 and 16,384 blocks. They have the same batch read
semantics and usable-prefix hit metric, with stable trace order for post-batch
insertions. These are hash-store reference policies, not replicas of a particular
serving engine's radix eviction. Their hard capacity constraint differs from
priced memory-time. A no-eviction replay provides the cold-window reuse reference.

\subsection{Verification and numerical implementation}
The 11-test suite includes the following independent checks:
\begin{itemize}
\item 140 random forests with up to seven nodes and four labels: exhaustive
feasible-profile enumeration versus the dynamic program and a separate NetworkX
minimum-cut encoding.
\item 80 random request traces with tied arrivals: all 64 profiles per trace
checked against event replay, unrestricted cut versus brute force, and a
coherence-constrained cut versus the dynamic program.
\item 20 additional traces: the event-age optimizer compared with exhaustive
integer timeouts from 0 through 10 seconds, including values absent from its grids.
\item 30 random five-node problems, each at 15 budgets: coherent frontier
interpolation versus a linear program over all feasible policies (450 comparisons).
\item Randomized replay accounting and explicit checks for zero timeout,
simultaneous requests, terminal censoring, orphaned blocks and decimal boundaries.
\end{itemize}
For every full-trace fitted profile in the closure studies, an expiration-heap
replay independently recomputes training hits and occupied time and checks the
optimization result. This tests the accounting as well as the solver.

The implementation uses Python 3.12, NumPy and PyMaxflow \cite{pymaxflow}. Flow capacities are
double precision; the proofs establish mathematical exactness, while the code
provides numerical checks rather than an exact-arithmetic certificate. The largest
absolute flow/objective residual among the 120 native fixed-grid cuts is
$1.64\times10^{-11}$ saved-block units. Returned cut labels are reduced to the
smallest thresholds preserving their selected training hits, removing arbitrary
zero-weight retention choices. Machine-specific timings are single observations
on an ARM64 macOS system and exclude data download; they are not a hardware
benchmark.

\section{Results}\label{sec:results}
\subsection{The coherent restriction is nearly exact on these traces}
Across three native traces, four groupings and ten prices, the unrestricted
fixed-grid optimum equals the coherent optimum to numerical tolerance in 118
of 120 cases. The two exceptions are the tool-agent group and group-depth models
at $\lambda=0.01$, each improving training utility by 4.718 saved-block units,
or $3.09\times10^{-5}$ per training input block. The largest corresponding test
utility improvement over the same-grouping coherent policy is
$2.79\times10^{-5}$ per input block. Thus the exact solver predominantly certifies
a simpler policy on these workloads; it does not demonstrate a large empirical
advantage from abandoning coherence.

The independent-local-hit upper bound can be less informative. For synthetic
group-depth retention at $\lambda=0.003$, the exact optimum is 599.160 while the
raw-hit bound is 865.498. At $\lambda=0.03$, the exact optimum is zero while the
raw-hit bound remains 31.636. The bound is valid but can ascribe value to
incompatible local hits. Reporting it together with the attainable coherent
solution prevents that value from being mistaken for reusable-prefix savings.

\subsection{Heterogeneity helps in specific storage-price regimes}
Figure~\ref{fig:utility} reports every selected-policy contrast; the complete
utility table is in Appendix~\ref{app:tables}. Tool-agent utility is greater than
the global-TTL baseline at all nine positive prices. Conversation utility is
lower at $10^{-4}$ and $3\times10^{-4}$, higher at the remaining seven positive
prices, and tied at zero. Synthetic utility is lower at prices from zero through
$10^{-3}$, higher at $0.003$, and tied at the five largest prices. This variation
rules out a blanket claim that more detailed profiles transfer better.

\begin{figure}[t]
\centering\includegraphics[width=\linewidth]{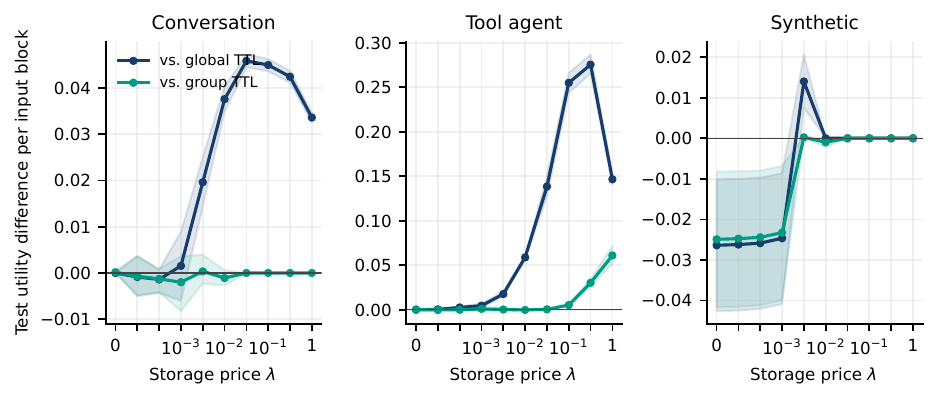}
\caption{Held-out utility difference of the validation-selected coherent policy
against global and group TTL, at all ten declared prices. Shaded regions are
descriptive paired 60-second-bucket bootstrap intervals, not simultaneous bands.
The horizontal positions enumerate the declared prices; the axis is not linear
in price. Negative differences are retained.}\label{fig:utility}
\end{figure}

As an interpretable operating point rather than a selected headline, at
$\lambda=0.01$ on tool-agent test data the selected policy reuses 37.113\% of full
input blocks with 69.85 mean resident blocks. Global TTL reuses 37.023\% with
715.97 mean blocks. The 90.2\% reduction in mean occupancy accompanies a similar
hit fraction in this replay. However, group TTL already achieves 37.139\% with
69.83 mean blocks and has slightly greater utility than the selected policy.
Most of this operating point's improvement therefore comes from prefix grouping,
an established idea, rather than the additional depth structure.

At larger tool-agent prices, depth structure becomes more useful: the selected
policy exceeds group-TTL utility by 0.00521, 0.03023 and 0.06094 per input block
at prices 0.1, 0.3 and 1, respectively. The synthetic trace gives the counterpoint:
at zero price the selected policy's hit fraction is 0.68154, compared with
0.70793 for global TTL. The training-optimal short profiles do not preserve all
reuse patterns appearing later. Validation selection between the two structured
classes does not protect against a global baseline outperforming both.

\begin{figure}[t]
\centering\includegraphics[width=\linewidth]{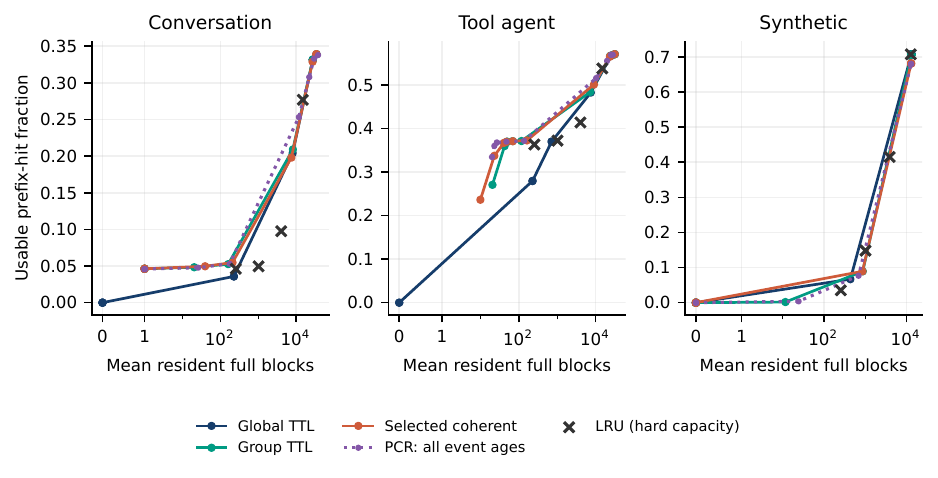}
\caption{Realized test hit fraction versus mean occupancy. TTL points are fitted
on training at the declared prices; connecting them is a visual guide, not a
certified test frontier. PCR with all event ages is exploratory. Crosses show
hard-capacity LRU at four capacities, plotted at its realized mean occupancy;
TTL and LRU do not have equivalent instantaneous memory guarantees. The synthetic
workload exposes weak transfer of structured static profiles.}\label{fig:tradeoffs}
\end{figure}

Figure~\ref{fig:tradeoffs} and Appendix~\ref{app:capacity} provide the
capacity-policy context. For example, tool-agent LRU at capacity 256 attains
0.36355 hit fraction and 255.38 mean blocks. At capacity 16,384 it attains 0.53841,
whereas the no-eviction reference is 0.57223. These comparisons describe the
observed tradeoffs and do not establish dominance at matched hard capacity.

\subsection{Continuous optimality and transfer are different properties}
The event-age construction optimizes all nonnegative static group-depth timeouts
on training. Figure~\ref{fig:event} shows its full comparison with fixed-grid PCR.
Training utility never decreases, as required by Theorem~\ref{thm:event}.
Tool-agent test improvements reach 0.02264 utility per input block at price 0.3,
but the event-age profile is worse at prices zero and 0.003. On the synthetic
trace, its test utility is 0.00925 lower at price 0.003 even though its training
utility improves. Conversation transfer is also mostly weaker. Exact empirical
optimization is therefore an oracle for a specified historical objective, not
a guarantee against workload shift or overfitting.

\begin{figure}[t]
\centering\includegraphics[width=.95\linewidth]{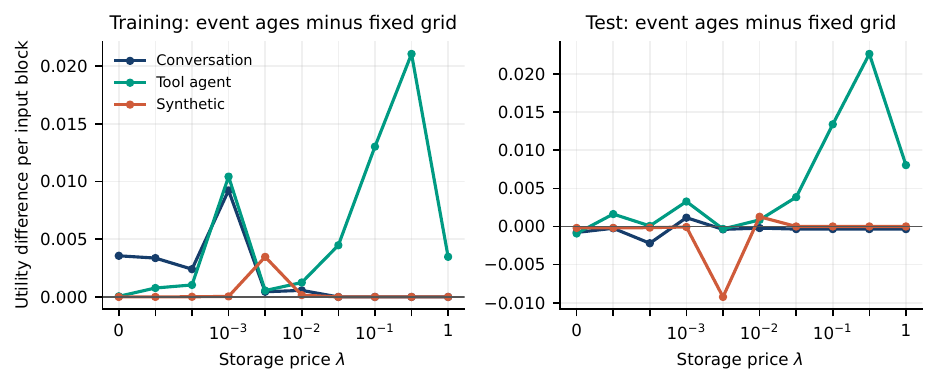}
\caption{Exploratory event-age PCR versus the frozen 13-choice grid, using the
same group-depth class. Training gains include the effect of allowing timeouts
above 900 seconds. Exact fitting does not uniformly improve the later window.}
\label{fig:event}
\end{figure}

\subsection{Computation, budgets and timestamp sensitivity}
Table~\ref{tab:runtime} gives graph sizes and observed solver times. Compression
reduces fixed-grid group-depth graphs to 2,362--4,694 nodes, while full event-age
graphs use 9,995--28,280 nodes. These sizes make the method suitable as an offline
configuration oracle for the released traces. The coherent frontier calculation
finds 59, 66 and 11 supported group-depth profiles on conversation, tool-agent
and synthetic training data, respectively. Their certified mixture frontiers
appear in Figure~\ref{fig:budget}. These are training guarantees only.

\begin{table}[t]
\centering\small\begin{tabular}{lrrrrr}
\toprule
Trace & Grid nodes & Event nodes & Grid solve (ms) & Event solve (ms) & Prep. (s)\\
\midrule
Conversation & 4,694 & 28,280 & 4.25 & 24.41 & 0.62\\
Tool agent & 4,341 & 26,901 & 4.00 & 23.44 & 0.68\\
Synthetic & 2,362 & 9,995 & 1.68 & 8.47 & 0.20\\
\bottomrule
\end{tabular}

\caption{Group-depth graph size and median solve time over ten prices. Preparation
time is for the event-age graph and cost tables. Solver calls use already
prepared inputs; timings are single-run descriptive measurements.}
\label{tab:runtime}
\end{table}

Coalescing arrival times to whole seconds and refitting the same fixed-grid
policies changes test utility by at most 0.000404, 0.009114 and 0.001198 across
all groupings, modes and prices for conversation, tool-agent and synthetic data,
respectively. This checks sensitivity to timestamp resolution and batching. It
does not simulate positive request duration: a block still becomes available
after its batch in the coalesced replay. The maximum tool-agent change occurs
outside the group-depth PCR curve, whose maximum absolute change is 0.001449.

\begin{figure}[t]
\centering\includegraphics[width=\linewidth]{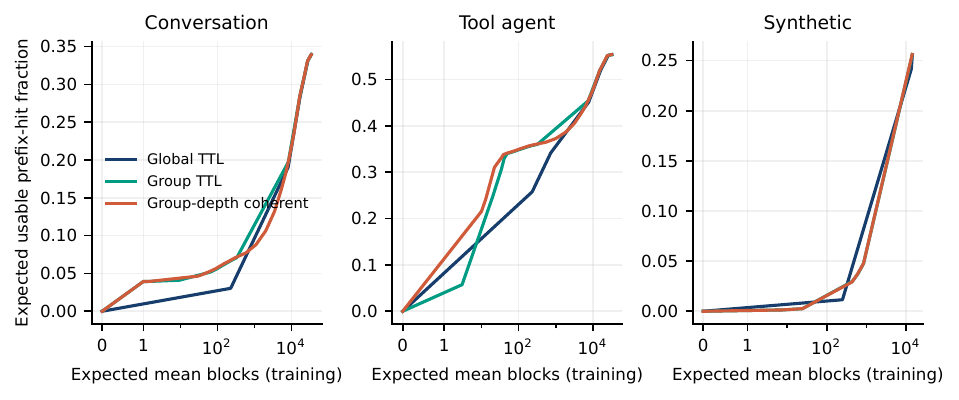}
\caption{Certified training frontiers for mixtures of coherent policies. Lines
interpolate whole-profile randomizations under an expected memory-time budget.
They are not guarantees on peak residency, nor held-out performance.}
\label{fig:budget}
\end{figure}

\section{Industrial use and limitations}\label{sec:limits}
A practical use of PCR is a retention-planning and audit component for an
independently addressed KV store. Given recent access logs, it can return a
timeout profile, its fitted reward and occupied time, the coherent lower bound,
and the local-hit upper bound. If the bounds agree, an operator has a concrete
reason to deploy the simpler coherent profile. If they differ, the exact graph
quantifies the historical opportunity before any engine integration. Repeated
offline fitting and a shadow replay on subsequent windows are plausible next
steps, but their value is not established by the present one-window evaluation.

The unit reward counts saved full prefix blocks. Actual prefill cost depends on
position, model architecture, batching, kernels and transfer overhead. The graph
accepts exogenous nonnegative per-hit rewards, but a reward that changes with the
chosen profile or queue state requires a different model. Similarly, a store
that automatically deletes descendants when an ancestor expires does not follow
our unrestricted residency model. The coherent variant is compatible with the
prefix-residency invariant, subject to the engine's other constraints.

The experiments use short, sampled workloads from one public release, one
chronological split, and a deliberately small family of transferable groups.
There is no claim of optimal online adaptation, generalization to other model
families, or measured dollar savings. Peak occupancy is reported because an
attractive mean can coexist with an infeasible instantaneous requirement. The
graph objective contains no hard-capacity constraint. Its mathematical certificate
also assumes that accesses refresh every referenced full block; selective writes
would require revised accounting.

The next systems experiment should record block completion and transfer times,
active-request pinning, actual peak allocations and model-specific prefill cost,
then compare policies at a common capacity and service-level objective. The
present code establishes a checked optimization component for that experiment.

\section{Conclusion}
Usable prefix reuse is a conjunction of local retention events. Making those
prerequisites explicit turns a seemingly coupled static-timeout problem into a
maximum-weight closure, and observed reuse ages suffice for continuous timeout
optimization. The resulting oracle can both optimize unrestricted retention and
certify a simpler coherent policy. Public-trace results show substantial room
for differentiated retention in some storage-price regimes, little additional
benefit from dropping coherence on the fixed grid, and nonuniform transfer from
finer empirical fitting. These distinctions make the method useful as a precise
configuration model and benchmark without conflating mathematical optimality
with unmeasured system performance.

\section*{Data and software availability}
The repository contains the optimizer, independent replay, exhaustive tests,
pinned trace-download manifest, all fitted profiles and result tables, figure
generation, and manuscript source. Raw traces are obtained from their original
Mooncake release rather than redistributed. The initial local protocol,
exploratory extensions and implementation corrections are recorded separately.

{\small\setlength{\bibsep}{3pt}
\bibliographystyle{plainnat}
\bibliography{references}}

\begin{thebibliography}{13}
\providecommand{\natexlab}[1]{#1}
\providecommand{\url}[1]{\texttt{#1}}
\expandafter\ifx\csname urlstyle\endcsname\relax
  \providecommand{\doi}[1]{doi: #1}\else
  \providecommand{\doi}{doi: \begingroup \urlstyle{rm}\Url}\fi

\bibitem[Dallot et~al.(2024)Dallot, Fesharaki, Pacut, and Schmid]{dependency}
Julien Dallot, Amirmehdi~Jafari Fesharaki, Maciej Pacut, and Stefan Schmid.
\newblock Dependency-aware online caching.
\newblock In \emph{IEEE INFOCOM}, 2024.
\newblock URL \url{https://schmiste.github.io/infocom24caching.pdf}.

\bibitem[Ferragut et~al.(2018)Ferragut, Rodriguez, and Paganini]{ferragut}
Andres Ferragut, Ismael Rodriguez, and Fernando Paganini.
\newblock Optimal timer-based caching policies for general arrival processes.
\newblock \emph{Queueing Systems}, 88:\penalty0 207--241, 2018.
\newblock \doi{10.1007/s11134-017-9540-3}.

\bibitem[Gao et~al.(2024)Gao, He, Sharma, Kang, Jevdjic, Deng, Yang, Yu, and
  Zuo]{cachedattention}
Bin Gao, Zhuomin He, Puru Sharma, Qingxuan Kang, Djordje Jevdjic, Junbo Deng,
  Xingkun Yang, Zhou Yu, and Pengfei Zuo.
\newblock Cost-efficient large language model serving for multi-turn
  conversations with {CachedAttention}.
\newblock In \emph{USENIX Annual Technical Conference (ATC)}, pages 111--126,
  2024.
\newblock URL \url{https://arxiv.org/abs/2403.19708}.

\bibitem[Goseling and Simeone(2018)]{softttl}
Jasper Goseling and Osvaldo Simeone.
\newblock {Soft-TTL}: Time-varying fractional caching.
\newblock arXiv:1807.03537, 2018.
\newblock URL \url{https://arxiv.org/abs/1807.03537}.

\bibitem[Guo et~al.(2026)Guo, Wu, and Yiu]{saga}
Dongxin Guo, Jikun Wu, and Siu~Ming Yiu.
\newblock {SAGA}: Workflow-atomic scheduling for {AI} agent inference on {GPU}
  clusters.
\newblock arXiv:2605.00528, 2026.
\newblock URL \url{https://arxiv.org/abs/2605.00528}.

\bibitem[Kwon et~al.(2023)Kwon, Li, Zhuang, Sheng, Zheng, Yu, Gonzalez, Zhang,
  and Stoica]{paged}
Woosuk Kwon, Zhuohan Li, Siyuan Zhuang, Ying Sheng, Lianmin Zheng, Cody~Hao Yu,
  Joseph~E. Gonzalez, Hao Zhang, and Ion Stoica.
\newblock Efficient memory management for large language model serving with
  {PagedAttention}.
\newblock In \emph{29th ACM Symposium on Operating Systems Principles (SOSP)},
  2023.
\newblock URL \url{https://arxiv.org/abs/2309.06180}.

\bibitem[{Mooncake contributors}(2025)]{trace}
{Mooncake contributors}.
\newblock {Mooncake FAST'25} release: public request traces, 2025.
\newblock URL
  \url{https://github.com/kvcache-ai/Mooncake/tree/621e36ba9eaf400d66d36ea4e56a5928bb8d2a2c/FAST25-release}.
\newblock Revision 621e36ba9eaf400d66d36ea4e56a5928bb8d2a2c; accessed 23
  September 2026.

\bibitem[Neila(2026)]{pymaxflow}
Pablo~M{\'a}rquez Neila.
\newblock {PyMaxflow}: Python library for graph cuts, 2026.
\newblock URL \url{https://github.com/pmneila/PyMaxflow}.
\newblock Software, version 1.3.2.

\bibitem[Picard(1976)]{picard}
Jean-Claude Picard.
\newblock Maximal closure of a graph and applications to combinatorial
  problems.
\newblock \emph{Management Science}, 22\penalty0 (11):\penalty0 1268--1272,
  1976.
\newblock \doi{10.1287/mnsc.22.11.1268}.

\bibitem[Qin et~al.(2025)Qin, Li, He, Cui, Ren, Zhang, Wu, Zheng, and
  Xu]{mooncake}
Ruoyu Qin, Zheming Li, Weiran He, Jialei Cui, Feng Ren, Mingxing Zhang, Yongwei
  Wu, Weimin Zheng, and Xinran Xu.
\newblock {Mooncake}: Trading more storage for less computation---a
  {KVCache}-centric architecture for serving {LLM} chatbot.
\newblock In \emph{23rd USENIX Conference on File and Storage Technologies
  (FAST)}, pages 155--170, 2025.
\newblock URL \url{https://www.usenix.org/conference/fast25/presentation/qin}.

\bibitem[Zhang et~al.(2026)Zhang, Kim, Feng, Du, Liu, Zhong, Ching, Jiang, and
  Hu]{cachescout}
Rui Zhang, Chaeeun Kim, Shaoting Feng, Kuntai Du, Yuhan Liu, Yi~Zhong,
  Cheng-Wei Ching, Junchen Jiang, and Liting Hu.
\newblock Learning agent execution for {KV}-cache management in agentic
  serving.
\newblock arXiv:2608.14624, 2026.
\newblock URL \url{https://arxiv.org/abs/2608.14624}.

\bibitem[Zheng et~al.(2024)Zheng, Yin, Xie, Sun, Huang, Yu, Cao, Kozyrakis,
  Stoica, Gonzalez, Barrett, and Sheng]{sglang}
Lianmin Zheng, Liangsheng Yin, Zhiqiang Xie, Chuyue Sun, Jeff Huang, Cody~Hao
  Yu, Shiyi Cao, Christos Kozyrakis, Ion Stoica, Joseph~E. Gonzalez, Clark
  Barrett, and Ying Sheng.
\newblock {SGLang}: Efficient execution of structured language model programs.
\newblock arXiv:2312.07104, 2024.
\newblock URL \url{https://arxiv.org/abs/2312.07104}.

\bibitem[Zheng et~al.(2026)Zheng, Wang, Ma, Wang, Wang, Chen, Zhang, Pan,
  Huang, Wu, Zhang, Cai, Liu, Ma, Du, Deng, Wu, Zhu, Zhang, and Li]{kareto}
Xianzhe Zheng, Zhengheng Wang, Ruiyan Ma, Rui Wang, Xiyu Wang, Rui Chen, Peng
  Zhang, Sicheng Pan, Zhangheng Huang, Chenxin Wu, Yi~Zhang, Bo~Cai, Kan Liu,
  Teng Ma, Yin Du, Dong Deng, Sai Wu, Guoyun Zhu, Wei Zhang, and Feifei Li.
\newblock Adaptive multi-objective tiered storage configuration for {KV} cache
  in {LLM} service.
\newblock arXiv:2603.08739, 2026.
\newblock URL \url{https://arxiv.org/abs/2603.08739}.

\end{thebibliography}

\clearpage\appendix
\section{Complete declared-price utility table}\label{app:tables}
\noindent Table values are $(H-\lambda C)/M$ on the test window. ``Selected''
denotes validation-selected coherent depth or group-depth retention. ``PCR grid''
is unrestricted group-depth retention on the frozen grid; ``PCR ages'' uses every
training reuse age. Rows are generated directly from the result files.
\begin{center}\small\setlength{\tabcolsep}{4pt}
\begin{tabular}{llrrrrrr}
\toprule
Trace & $\lambda$ & Global & Group & Selected & PCR grid & PCR ages & Sel.\ $-$ Global\\
\midrule
Conversation & 0 & 0.33917 & 0.33901 & 0.33917 & 0.33875 & 0.33796 & 0.00000\\
Conversation & 0.0001 & 0.29703 & 0.29688 & 0.29617 & 0.29578 & 0.29554 & -0.00086\\
Conversation & 0.0003 & 0.22758 & 0.22746 & 0.22619 & 0.22596 & 0.22376 & -0.00140\\
Conversation & 0.001 & 0.10211 & 0.10563 & 0.10365 & 0.10365 & 0.10480 & 0.00154\\
Conversation & 0.003 & 0.02721 & 0.04647 & 0.04682 & 0.04682 & 0.04644 & 0.01961\\
Conversation & 0.01 & 0.00711 & 0.04577 & 0.04468 & 0.04468 & 0.04447 & 0.03757\\
Conversation & 0.03 & 0.00000 & 0.04584 & 0.04584 & 0.04584 & 0.04549 & 0.04584\\
Conversation & 0.1 & 0.00000 & 0.04496 & 0.04496 & 0.04496 & 0.04461 & 0.04496\\
Conversation & 0.3 & 0.00000 & 0.04243 & 0.04243 & 0.04243 & 0.04209 & 0.04243\\
Conversation & 1 & 0.00000 & 0.03360 & 0.03360 & 0.03360 & 0.03326 & 0.03360\\
\midrule
Tool agent & 0 & 0.57121 & 0.57110 & 0.57121 & 0.57091 & 0.56999 & 0.00000\\
Tool agent & 0.0001 & 0.54504 & 0.54573 & 0.54544 & 0.54545 & 0.54707 & 0.00040\\
Tool agent & 0.0003 & 0.49983 & 0.50238 & 0.50236 & 0.50236 & 0.50243 & 0.00253\\
Tool agent & 0.001 & 0.41582 & 0.41946 & 0.42037 & 0.42037 & 0.42364 & 0.00455\\
Tool agent & 0.003 & 0.35097 & 0.36837 & 0.36859 & 0.36859 & 0.36826 & 0.01762\\
Tool agent & 0.01 & 0.30602 & 0.36513 & 0.36486 & 0.36489 & 0.36575 & 0.05884\\
Tool agent & 0.03 & 0.21846 & 0.35645 & 0.35688 & 0.35688 & 0.36073 & 0.13843\\
Tool agent & 0.1 & 0.07546 & 0.32522 & 0.33043 & 0.33043 & 0.34383 & 0.25497\\
Tool agent & 0.3 & 0.00000 & 0.24521 & 0.27544 & 0.27544 & 0.29808 & 0.27544\\
Tool agent & 1 & 0.00000 & 0.08566 & 0.14660 & 0.14660 & 0.15465 & 0.14660\\
\midrule
Synthetic & 0 & 0.70793 & 0.70646 & 0.68154 & 0.68007 & 0.67988 & -0.02639\\
Synthetic & 0.0001 & 0.69913 & 0.69767 & 0.67292 & 0.67146 & 0.67128 & -0.02621\\
Synthetic & 0.0003 & 0.68154 & 0.68010 & 0.65567 & 0.65423 & 0.65407 & -0.02587\\
Synthetic & 0.001 & 0.61995 & 0.61858 & 0.59531 & 0.59393 & 0.59386 & -0.02464\\
Synthetic & 0.003 & 0.05826 & 0.07203 & 0.07226 & 0.07226 & 0.06301 & 0.01401\\
Synthetic & 0.01 & 0.00000 & 0.00097 & 0.00000 & 0.00097 & 0.00224 & 0.00000\\
Synthetic & 0.03 & 0.00000 & 0.00000 & 0.00000 & 0.00000 & 0.00000 & 0.00000\\
Synthetic & 0.1 & 0.00000 & 0.00000 & 0.00000 & 0.00000 & 0.00000 & 0.00000\\
Synthetic & 0.3 & 0.00000 & 0.00000 & 0.00000 & 0.00000 & 0.00000 & 0.00000\\
Synthetic & 1 & 0.00000 & 0.00000 & 0.00000 & 0.00000 & 0.00000 & 0.00000\\
\bottomrule
\end{tabular}

\end{center}

\clearpage
\section{Capacity baselines and unbounded reference}\label{app:capacity}
\noindent Capacity is measured in full blocks. All rows use a cold test-window
start. FIFO and LRU are capacity-limited hash-store baselines. Their mean
occupancy is not a separate capacity guarantee for the TTL methods.
\begin{center}\small
\begin{tabular}{llrrrr}
\toprule
Trace & Policy & Capacity & Hit fraction & Mean blocks & Peak blocks\\
\midrule
Conversation & LRU & 256 & 0.04625 & 255.33 & 256\\
Conversation & FIFO & 256 & 0.04259 & 255.33 & 256\\
Conversation & LRU & 1024 & 0.04957 & 1018.52 & 1024\\
Conversation & FIFO & 1024 & 0.04720 & 1018.52 & 1024\\
Conversation & LRU & 4096 & 0.09753 & 4026.41 & 4096\\
Conversation & FIFO & 4096 & 0.09245 & 4026.41 & 4096\\
Conversation & LRU & 16384 & 0.27686 & 15003.35 & 16384\\
Conversation & FIFO & 16384 & 0.24975 & 15003.35 & 16384\\
Conversation & Unbounded & -- & 0.34339 & 38862.20 & 73542\\
\midrule
Tool agent & LRU & 256 & 0.36355 & 255.38 & 256\\
Tool agent & FIFO & 256 & 0.33270 & 255.38 & 256\\
Tool agent & LRU & 1024 & 0.37253 & 1018.67 & 1024\\
Tool agent & FIFO & 1024 & 0.36279 & 1018.67 & 1024\\
Tool agent & LRU & 4096 & 0.41423 & 4024.92 & 4096\\
Tool agent & FIFO & 4096 & 0.40727 & 4024.92 & 4096\\
Tool agent & LRU & 16384 & 0.53841 & 14909.48 & 16384\\
Tool agent & FIFO & 16384 & 0.51924 & 14909.48 & 16384\\
Tool agent & Unbounded & -- & 0.57223 & 35768.20 & 67335\\
\midrule
Synthetic & LRU & 256 & 0.03526 & 255.69 & 256\\
Synthetic & FIFO & 256 & 0.03662 & 255.69 & 256\\
Synthetic & LRU & 1024 & 0.14841 & 1016.80 & 1024\\
Synthetic & FIFO & 1024 & 0.15305 & 1016.80 & 1024\\
Synthetic & LRU & 4096 & 0.41506 & 3942.27 & 4096\\
Synthetic & FIFO & 4096 & 0.39546 & 3942.27 & 4096\\
Synthetic & LRU & 16384 & 0.70769 & 12824.97 & 16384\\
Synthetic & FIFO & 16384 & 0.69476 & 12824.97 & 16384\\
Synthetic & Unbounded & -- & 0.70793 & 13317.67 & 18074\\
\bottomrule
\end{tabular}

\end{center}

\clearpage
\section{Reproduction and audit details}
The repository's \texttt{Makefile} separates data download, tests, experiments,
figures and typesetting. The source revision and file digests are recorded in
\texttt{data/manifest.json}; experiment outputs include complete price sweeps,
labels, raw and usable hit counts, mean and peak occupancy, and runtime metadata.
The prospective protocol digest is
\begin{center}\footnotesize\texttt{3f47a960f95a3f7b13fbf68dfc8fab51fde3f12a2f9b7422b0a06497b9a31719}.\end{center}
This digest identifies the local plan, not independent registration. Timing
columns can vary between runs; the replay endpoints and fitted profiles are
deterministic for the declared inputs and numerical environment. Test and
validation caches are intentionally cold rather than initialized from training.
No hidden warm-start state is included in the reported savings.

Integer time comparisons avoid disagreements between subtraction-based age
tests and addition-based expiration tests at decimal boundaries. Source times
have millisecond precision; calculations use integer microseconds so fractional
split boundaries remain representable. Event-grid rounding is downward to actual
training ages, which explains why tied training optima can transfer differently
from longer fixed-grid timeouts. All numerical results in this manuscript were
regenerated after these implementation corrections.

\section*{Computational assistance}
An OpenAI Codex generative-AI assistant was used to develop the formulation,
implement and test the algorithms, run the computational experiments, and draft
and revise the manuscript. Reported numerical results were produced by the
accompanying executable code using the cited public traces. The author is
responsible for the manuscript and its conclusions.
\end{document}